\documentclass{article}

\PassOptionsToPackage{numbers,sort&compress}{natbib}

\usepackage[preprint]{neurips_2026}

\usepackage[utf8]{inputenc} 
\usepackage[T1]{fontenc}    
\usepackage{hyperref}       
\usepackage{url}            
\usepackage{booktabs}       
\usepackage{amsfonts}       
\usepackage{nicefrac}       
\usepackage{microtype}      
\usepackage{xcolor}         

\newlength{\leftpanel}
\newlength{\rightpanel}
\title{Blocked Gibbs meets Diffusion Transformers: Unsupervised Learning for Constraint Optimization}

\author{%
  Yudong W. Xu$^1$ \quad Wenhao Li$^1$ \quad Xiaoyu Wang$^1$ \quad Scott Sanner$^{1,2}$ \quad Elias B. Khalil$^1$
  \\
$^1$ University of Toronto \quad $^2$ Vector Institute \\
  \texttt{wil.xu@mail.utoronto.ca} \\
}

\usepackage{amsmath}
\usepackage{amssymb}
\usepackage{mathtools}
\usepackage{amsthm}
\usepackage[capitalize,noabbrev]{cleveref}
\usepackage{multirow}
\theoremstyle{plain}
\newtheorem{theorem}{Theorem}[section]
\newtheorem{proposition}[theorem]{Proposition}

\usepackage[textsize=tiny]{todonotes}
\usepackage{subcaption}
\usepackage{wrapfig}
\usepackage{xcolor}

\DeclareMathOperator{\E}{\mathbb{E}}
\DeclareMathOperator{\KL}{KL}

\newcommand{\methodname}{BloGDiT}

\begin{document}

\maketitle

\begin{abstract}

Diffusion models have shown promise in learning to solve constraint optimization problems. However, they are mostly restricted to problems with binary variables and rely on graph neural networks, hindering their application to a broader range of problems such as those with general discrete variables or constraint structures that necessitate global rather than local reasoning. We investigate the use of Diffusion Transformers to address the aforementioned limitations. A naive implementation performs poorly due to a fundamental mismatch between the standard diffusion process and constraint solving: while the former applies \textit{small, incremental denoising} across \textit{all} variables, the latter requires \textit{substantially altering specific subsets of variables} to attain feasibility or optimality. Our method, \textit{\textbf{Blo}cked \textbf{G}ibbs \textbf{Di}ffusion \textbf{T}ransformer} (\methodname{}), is the first to address this limitation by replacing standard joint Gaussian denoising with blocked Gaussian denoising. 
\methodname{} uses iterative block resampling and anneals the block size over time to facilitate large, targeted edits within a block of variables. 
Across Sudoku, Graph Coloring, Maximum Independent Set, and MaxCut, \methodname{} matches or outperforms existing methods, demonstrating that blocked Gibbs-style diffusion provides a highly effective inductive bias for Transformer-based constraint satisfaction and optimization.

\end{abstract}

\section{Introduction}
\label{sec:intro}

\textcolor{black}{
Constraint optimization is fundamental to many real-world applications. Such problems are often solved by iterative methods: a candidate assignment is proposed, evaluated, and then modified. This iteration repeats until a high-quality solution is found, subject to a computation budget. Classical sampling and local search methods can be highly effective, but they typically solve each instance from scratch and rely on hand-designed (assignment) proposal mechanisms. This raises a natural question: \textit{can we learn a high-performance, reusable proposal mechanism whose cost is amortized across problem instances?}
Diffusion models~\cite{sohl2015deep,ho2020denoising,austin2021structured} provide an appealing answer. Their iterative denoising process can be interpreted as a learned refinement procedure: starting from a noisy assignment, the model repeatedly moves the state toward lower-energy regions. As illustrated in \Cref{fig:traj_intuition}, the dynamics of the learned algorithm stand in sharp contrast with pure discrete sampling-based search, which may explore excessively before reaching a good solution.}

Recent works have applied diffusion models to combinatorial optimization and discrete reasoning~\cite{austin2021structured,ye2025beyond} but remain limited in scope. Most focus on binary graph problems and rely heavily on graph neural networks (GNNs) as the backbone architecture~\cite{sun2023difusco,sanokowski2024a,zhao2024disco}. While GNNs are effective for sparse, fixed topologies, their reliance on local message-passing can make it difficult to capture global dependencies and long-range interactions~\cite{Oono2020Graph,alon2021on}. Many problems involve general non-binary variables and dense global constraints whose natural representations are not well captured by a pairwise constraint graph.
To build a general-purpose neural heuristic,
the architecture must be able to model complex, global dependencies across diverse problem types.




\begin{figure}[t]
    \centering
    \includegraphics[width=\linewidth]{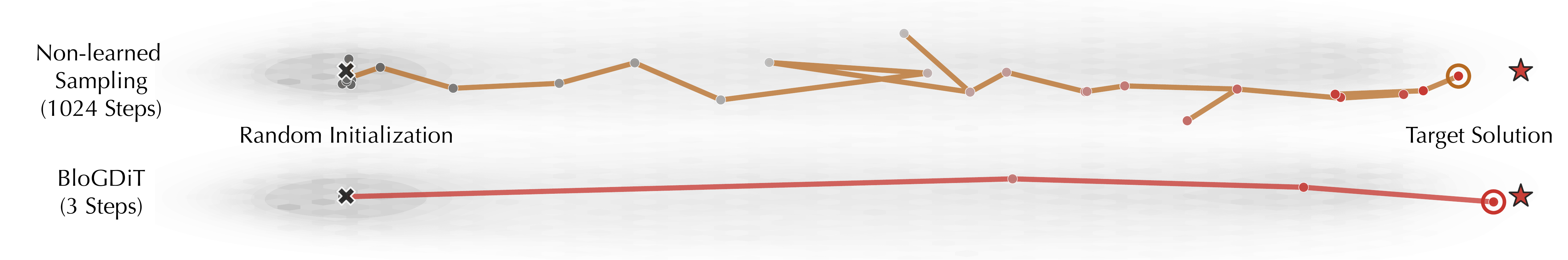}
    \caption{\textcolor{black}{\textbf{Intuition for learned diffusion-guided refinement.}
We visualize assignment trajectories for a Sudoku instance in a 2D projection of the solution space. While the non-learned baseline~\cite{feng2025regularized} takes many exploratory updates (many of which are clustered around the initial assignment), BloGDiT reaches the target in far fewer denoising steps. This illustrates how a learned diffusion model amortizes the cost of search through directed iterative refinement. Projection details are provided in Appendix~\ref{app:traj_plot}.}} 
    \label{fig:traj_intuition}
\end{figure}

\begin{wrapfigure}{r}{0.4\linewidth}
    \centering
    \includegraphics[width=\linewidth]{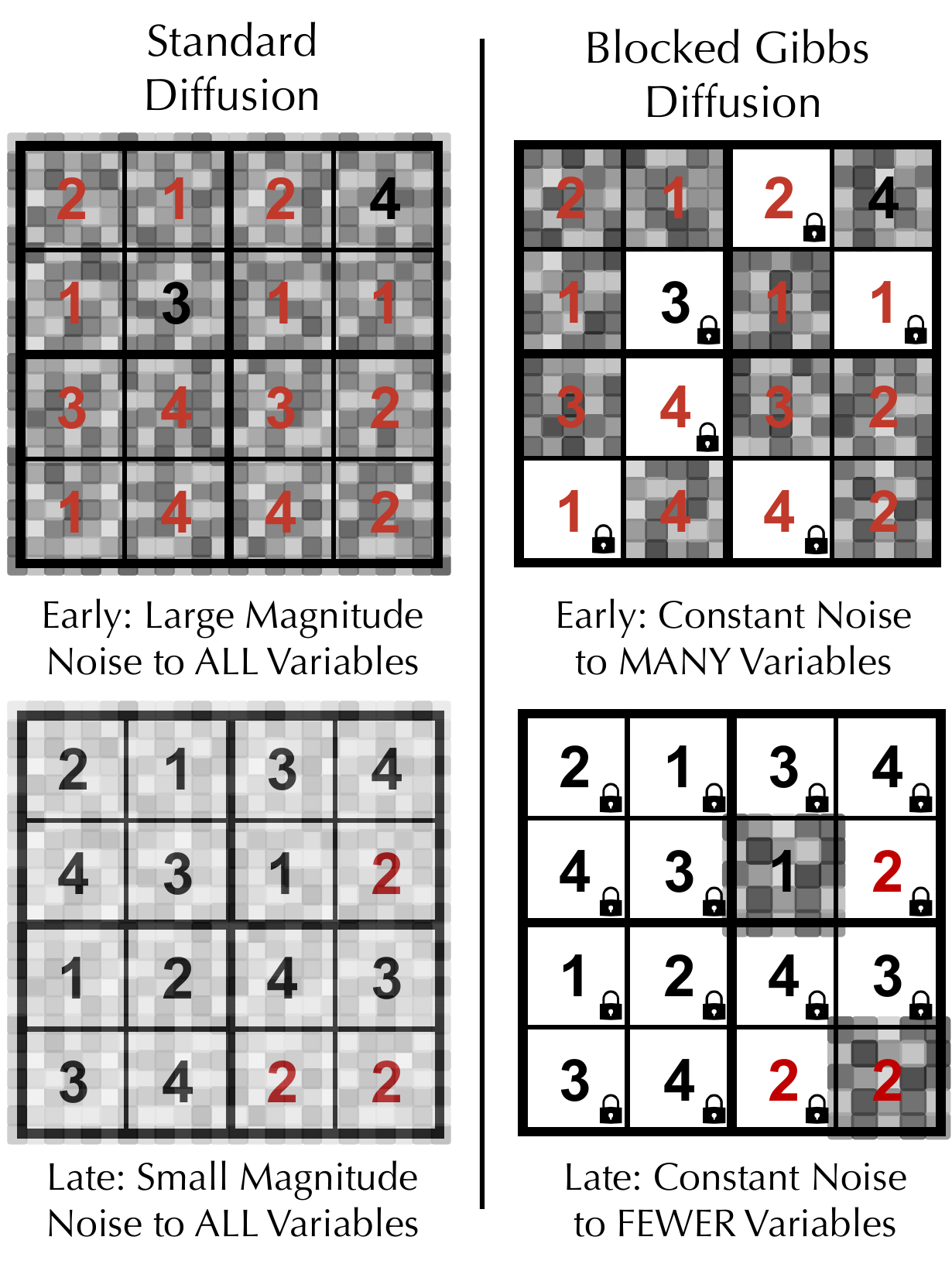}
    \caption{\textbf{Intuition for block-wise diffusion using a small Sudoku problem.} As a partial assignment becomes nearly feasible, maximum progress is made by resampling a small subset of variables, rather than applying small denoising updates everywhere.}
    \label{fig:sudoku_intuition}
\end{wrapfigure}
\textit{Diffusion Transformers} (DiTs) offer this capability~\cite{peebles2023dit}. With their global attention mechanism~\cite{vaswani2017attention}, Transformers can inherently model interactions between any pair of variables and have demonstrated superior scaling behavior in generative modeling~\cite{kaplan2020scaling}. 
As a starting point, we found that swapping out a backbone GNN for a DiT is insufficient. A naive application of DiT using standard ``global'' Gaussian diffusion schedules performs poorly on constraint problems. In preliminary experiments, a vanilla DiT generalized poorly and generated suboptimal solutions. We attribute this to a mismatch between global denoising and discrete optimization. In constraint satisfaction, improving a near-feasible solution typically requires \textit{substantially} changing \textit{only a small subset of variables } (e.g., resolving a specific constraint that is currently infeasible). Standard global diffusion, which updates every variable at every step, struggles to perform these surgical edits without disrupting the rest of the assignment.


To bridge this gap, we draw inspiration from \textit{blocked Gibbs sampling}~\cite{geman1984stochastic,gelfand1990sampling,jensen1995blocking} and recent work on Gibbs-style Diffusion~\cite{anil2025interleaved}. Rather than treating the entire solution as a single joint distribution to be denoised at once, we view the generation process as a sequence of conditional updates. This perspective aligns naturally with \textit{large neighborhood search}~\cite{lnsbook,shaw1998using}, a foundational heuristic in operations research in which solutions are improved by iteratively ``destroying'' (unassigning) a subset of variables and ``repairing'' (re-optimizing) them while keeping the remaining variables fixed.

\paragraph{Contributions.} Building on these insights, we propose \textbf{\methodname{}}, a \textbf{Blo}cked \textbf{G}ibbs \textbf{Di}ffusion \textbf{T}ransformer. Unlike standard diffusion models that operate in the full joint space, \methodname{} operates via iterative \emph{block resampling}. At each step, we apply continuous Gaussian noise to a specific \emph{subset} (block) of variables—effectively treating them as unknown or corrupted—and use the Transformer to denoise this block conditional on the remaining ``clean'' variables. This allows the model to focus its capacity on repairing sub-regions of the variable assignment. Crucially, we anneal the block size over time: early steps modify large blocks for global exploration, while later steps focus on smaller blocks for precise local refinement (\Cref{fig:sudoku_intuition}).  

To train~\methodname{}, we derive a mask-augmented joint KL upper bound for unsupervised training on Boltzmann targets and instantiate it with a masked Gaussian logit-space diffusion, yielding a tractable objective with closed-form entropy and forward noise-matching terms.

We evaluate \methodname{} on benchmark instances from four well-studied constraint satisfaction and optimization problems: Sudoku, Graph Coloring, Maximum Independent Set, and MaxCut. These problems vary in the types of constraints they involve, instance sizes, and variable domains (binary vs. categorical). Compared to a wide variety of methods that span the exact Google OR-Tools solver, pure sampling, and learning-based heuristics, \methodname{} achieves state-of-the-art or competitive performance, validating that the combination of Transformer backbones and blocked Gibbs updates provides an effective inductive bias for general constraint reasoning. 


\section{Background}
\subsection{Constraint Programming}
A \emph{constraint satisfaction problem} (CSP) is specified by a tuple
$(\mathcal{V}, D, C)$, where $\mathcal{V}=\{V^{(1)},\dots,V^{(n)}\}$ is a set of variables, $D=\{D^{(1)},\dots,D^{(n)}\}$ is a collection of finite discrete domains, and $C=\{c_1,\dots,c_m\}$ is a set of constraints. Each constraint $c_j$ is a relation over a subset of variables, 
restricting which joint assignments are allowed. Let $\mathcal{X} \triangleq \prod_{i=1}^n D^{(i)}$ denote the Cartesian product over variable domains, i.e., $X=(x^{(1)},\dots,x^{(n)}) \in \mathcal{X}$. The goal in a CSP is to find an assignment $X \in \mathcal{X}$ that satisfies all constraints:
$
c_j(X) = \texttt{true}, \forall j \in [m],
$
and a \emph{constraint optimization problem} (COP) adds an additional objective: $\min_{X \in \mathcal{X}} f(X)$.
\textit{Constraint programming} (CP) studies modeling languages and algorithms for CSPs/COPs,
with a focus on expressive \emph{global constraints} that compactly capture common
combinatorial structure~\cite{cp-handbook,globalconstraints}.
A common example is \textsc{AllDifferent}, which enforces that a subset of variables take on distinct values~\cite{alldifferent}.

\subsection{CP as sampling a Boltzmann distribution}
A probabilistic view of CP consists in defining an
\emph{energy} $H:\mathcal{X}\to\mathbb{R}$ whose low-energy assignments are desirable
(i.e., constraint-feasible), then considering the associated Boltzmann distribution
at temperature $\tau>0$:
\[
p_B(X) = \frac{\exp\left(-H(X)/\tau\right)}{Z(\tau)}, \text{where }
Z(\tau) = \sum_{X'\in\mathcal{X}} \exp\left(-H(X')/\tau\right).
\]
In the most general COP case, the energy combines the objective with constraint penalties:
\begin{equation}
H(X) = f(X) + \sum_{j=1}^m \lambda_j \phi_j(X),
\label{eq:cp_energy}
\end{equation}
where each $\phi_j(X)\ge 0$ satisfies $\phi_j(X)=0$ if and only if constraint
$c_j$ is satisfied, and $\lambda_j\ge 0$ is set to ensure feasible solutions are prioritized (e.g., a
Lagrangian-style relaxation~\cite{wolsey2020integer}). For CSPs, one may take $H(X)=\sum_j
\lambda_j \phi_j(X)$. As $\tau \to 0$, the Boltzmann distribution concentrates on
(assignments near) global minimizers of $H$. In practice, if one could efficiently sample from $p_B(X)$, then near-optimal solutions are likely to be found.

\subsection{Unsupervised diffusion on Boltzmann distributions}

In practice, sampling from $p_B(X)$ is often intractable due to the normalization factor $Z(\tau)$. A common approach is therefore to approximate $p_B(X)$ with a learned surrogate $q_\theta(X)$ that can be sampled from efficiently. Diffusion models provide one such approach. 
However, when data samples are not easily available, which is often the case for complex CSP/COP problems, the standard diffusion objective is not available. 
We take inspiration from recent approaches that apply diffusion-based learning to sample from Boltzmann densities in an unsupervised setting~\cite{sanokowski2024a,wang2025energy}.

Formally, let $q_\theta(X_0)$ denote the (generally intractable) model-implied marginal at time $0$, induced by a reverse-time Markov chain with parameters $\theta$.
We seek to minimize the reverse KL divergence, $\KL\left(q_\theta(X_0)\|p_B(X_0)\right)$.
As in~\citet{sanokowski2024a} and~\citet{wang2025energy}, we upper bound this marginal KL by a joint KL over trajectories
$X_{0:T}$:
\(
\KL\left(q_\theta(X_0)\|p_B(X_0)\right)
\le
\KL\left(q_\theta(X_{0:T})\|p(X_{0:T})\right)
\), 
where the forward (corruption) and reverse (generative) trajectory distributions are defined as
\begin{equation}
p(X_{0:T})
= p_B(X_0)\prod_{t=1}^{T} p(X_t \mid X_{t-1}), \quad
q_\theta(X_{0:T})
= q(X_T)\prod_{t=1}^{T} q_{\theta}(X_{t-1} \mid X_t), \label{eq:joint_q_diffuco}
\end{equation}
with $q(X_T)$ a fixed prior noise distribution 
and $\{p_t\}_{t=1}^T$ a fixed
forward noising process.

\section{Blocked Gibbs Diffusion}

\begin{figure*}[t]
    \centering
    \includegraphics[width=\linewidth]{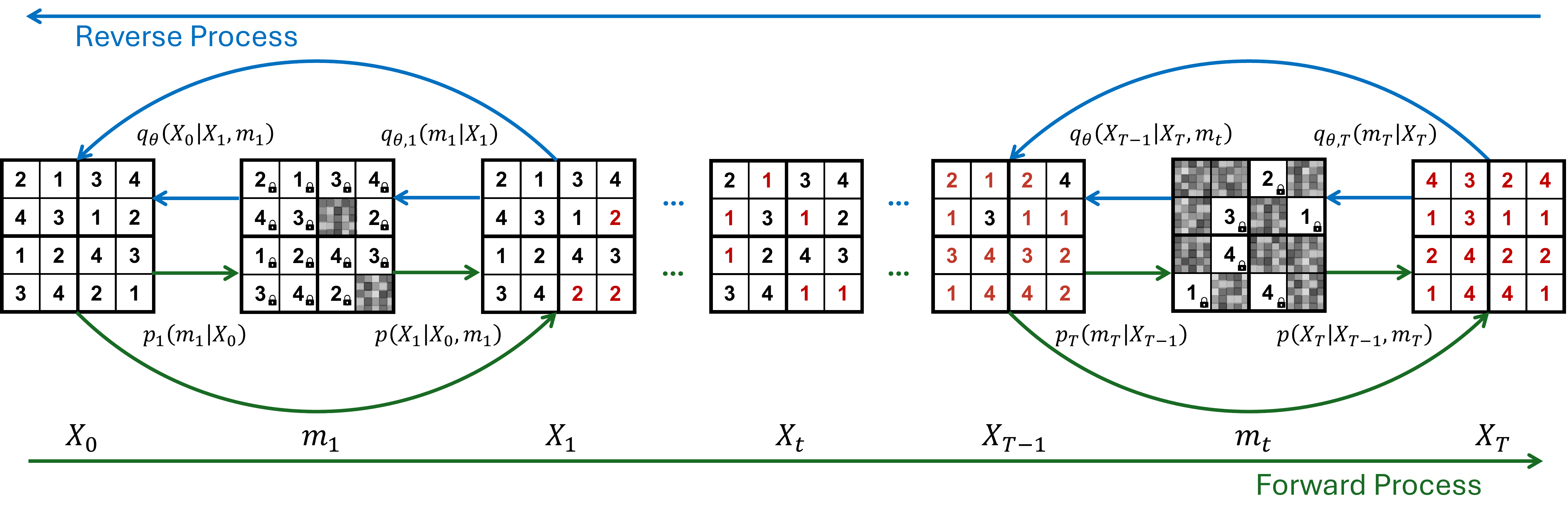}
    \caption{\textbf{Illustration of the blocked Gibbs diffusion process on a small Sudoku problem.}
The reverse chain (blue, top) is initialized from a high-noise Gaussian prior $Z_T \sim \mathcal{N}(0,s^2 I)$, which decodes to a random discrete assignment $X_T$. At step $t$, a binary mask $m_t$ is sampled to select a block of variables to update; the model samples $X_{t-1}^{m_t} \sim q_\theta(\cdot \mid X_t, m_t)$ while copying $X_{t-1}^{\neg m_t} = X_t^{\neg m_t}$ as-is, yielding blocked Gibbs-style resampling moves. The masking rate (block size) is annealed over time, transitioning from coarse global edits to fine local refinements. After $T$ steps, the final sample $X_0$ is expected to approximate the Boltzmann target
$p_B(X_0) \propto \exp(-H(X_0)/\tau)$.}

    \label{fig:placeholder}
\end{figure*}

To overcome limitations of existing diffusion models (cf. Section~\ref{sec:intro}), we advocate for a blocked Gibbs diffusion procedure 
that proposes
large, targeted resampling updates within the block. By annealing the block size over time, the diffusion transitions from coarse global edits to local refinements.
\textcolor{black}{In this section, we define the mask-augmented forward and reverse processes underlying blocked Gibbs diffusion and derive its training objective. We describe our practical instantiation in \cref{sec:method_details}.}

\subsection{Auxiliary mask variable}
\label{sec:aux_mask}
To model \emph{block} updates, we introduce an auxiliary binary mask $m_t$ at each step $t$.
Concretely, the $i$-th variable is associated with a bit mask $m_t^{(i)}\in\{0,1\}$: if $m_t^{(i)}=1$, $x^{(i)}$ is \emph{updated} at step $t$ (in both the forward and reverse transitions); if $m_t^{(i)}=0$, $x^{(i)}$ is \emph{frozen} and copied through the transition unchanged.

%



\paragraph{Mask-augmented transitions.}
We augment both the reverse and forward transition kernels with auxiliary $m_t$ variables:
\begin{align}
\label{eq:q_mask_mix}
q_{\theta}(X_{t-1}\mid X_t)
= \sum\nolimits_{m_t} q_{\theta,t}(m_t\mid X_t)
q_{\theta}(X_{t-1}\mid X_t,m_t), \\
p(X_t\mid X_{t-1})
= \sum\nolimits_{m_t} p_t(m_t\mid X_{t-1})
p(X_t\mid X_{t-1},m_t).
\end{align}
where $q_{\theta,t}(m_t\mid X_t)$ and $p_t(m_t\mid X_{t-1})$ are the reverse/forward mask-selection distributions.

The transition distributions leave unmasked variables unchanged. We define this property component-wise: for the $i$-th variable, the reverse transition is given by:
\begin{equation*}
q_{\theta}(x_{t-1}^{(i)} \mid X_t, m_t) =
\begin{cases}
\mathbb{I}\{x_{t-1}^{(i)} = x_t^{(i)}\} & \text{if } m_t^{(i)} = 0, \\
q_{\theta}\big(x_{t-1}^{(i)} \mid X_t\big) & \text{if } m_t^{(i)} = 1,
\end{cases}
\end{equation*}
where $\mathbb{I}\{\cdot\}$ is the indicator function.
The forward process $p(x_t^{(i)}\mid X_{t-1},m_t)$ is defined analogously, ensuring $x_t^{(i)} = x_{t-1}^{(i)}$ whenever the variable is unmasked.

\paragraph{Augmented joint distributions.}
We can equivalently define forward and reverse processes on the augmented space $(X_{0:T}, m_{1:T})$ by factorizing each step through the mask:
\begin{align}
p(X_{0:T}, m_{1:T}) 
& = p_B(X_0)\prod\nolimits_{t=1}^{T}
p_t(m_t\mid X_{t-1})\,p(X_t\mid X_{t-1},m_t),
\label{eq:joint_p}
\\
q_{\theta}(X_{0:T}, m_{1:T})
& = q(X_T)\prod\nolimits_{t=1}^{T}
q_{\theta,t}(m_t\mid X_t)\,q_{\theta}(X_{t-1}\mid X_t,m_t).
\label{eq:joint_q}
\end{align}
Importantly, introducing the auxiliary variables $m_{1:T}$ does not change the marginal trajectory distribution over $X_{0:T}$, as stated in Proposition~\ref{prop:1} with proof in Appendix~\ref{appendix:aux_proof}.

\begin{proposition}[Sampling equivalence under mask augmentation]
Let $(X_{0:T},m_{1:T})$ be sampled from the augmented joint distribution~\eqref{eq:joint_q}. By discarding $m_{1:T}$, the resulting trajectory
$X_{0:T}$ is distributed as the original process $q_\theta(X_{0:T})$ induced by the
mask-augmented transitions~\eqref{eq:q_mask_mix}:
\(
X_{0:T} \sim q_\theta(X_{0:T})
\Longleftrightarrow
\sum_{m_{1:T}} q_\theta(X_{0:T},m_{1:T}) = q_\theta(X_{0:T}).
\)
\label{prop:1}
\end{proposition}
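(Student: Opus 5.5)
The plan is a direct marginalization argument. It exploits the fact that in the augmented joint \eqref{eq:joint_q} each mask $m_t$ appears in exactly one factor of the product, namely the pair $q_{\theta,t}(m_t\mid X_t)\,q_\theta(X_{t-1}\mid X_t,m_t)$. Since each $m_t$ ranges over the finite set $\{0,1\}^n$, all sums are finite. We may therefore freely exchange the order of summation and push each sum inside the product.

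First, I will write
\[
\sum_{m_{1:T}} q_\theta(X_{0:T},m_{1:T}) = q(X_T)\sum_{m_1}\cdots\sum_{m_T}\prod_{t=1}^T q_{\theta,t}(m_t\mid X_t)\,q_\theta(X_{t-1}\mid X_t,m_t).
\]
The prior $q(X_T)$ does not depend on any mask, so it factors out. The product is a product of functions $g_t(m_t)$, each depending on a single mask variable and on the fixed trajectory $X_{0:T}$. I will then apply the distributive law $\sum_{m_1,\dots,m_T}\prod_t g_t(m_t)=\prod_t\sum_{m_t} g_t(m_t)$. This can be proved by a routine induction on $T$ if desired.

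Second, I will identify each inner sum $\sum_{m_t} q_{\theta,t}(m_t\mid X_t)\,q_\theta(X_{t-1}\mid X_t,m_t)$ with the mask-mixture reverse kernel $q_\theta(X_{t-1}\mid X_t)$, using its definition in \eqref{eq:q_mask_mix}. Substituting back gives $q(X_T)\prod_{t=1}^T q_\theta(X_{t-1}\mid X_t)$. This is exactly the original Markov trajectory distribution $q_\theta(X_{0:T})$ of \eqref{eq:joint_q_diffuco}, with those transitions as its kernels.

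Third, I will convert this identity into the sampling statement. Ancestral sampling of \eqref{eq:joint_q} draws $X_T\sim q$ and then, for $t=T,\dots,1$, draws $m_t$ and then $X_{t-1}$. This produces a pair $(X_{0:T},m_{1:T})$ with exactly that joint law. Discarding $m_{1:T}$ yields the marginal law of $X_{0:T}$, which is the sum just computed. Hence $X_{0:T}\sim q_\theta(X_{0:T})$, and conversely the equivalence in the statement is just this marginal identity.

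There is no real obstacle in this argument. The only point requiring care is the bookkeeping in the first step, namely checking that no other factor depends on $m_t$. This holds because the kernel for step $t-1$ conditions on $X_{t-1}$ only, not on $m_t$. That dependence structure is precisely what justifies the product–sum interchange. The same argument applies verbatim to the forward joint \eqref{eq:joint_p}.
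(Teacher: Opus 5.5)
Your proposal is correct and follows the paper's proof essentially step for step. You factor out $q(X_T)$, exchange the sum over $m_{1:T}$ with the product because each $m_t$ appears only in the $t$-th factor (the paper phrases this as conditional independence of the masks given $X_t$), and identify each inner sum with the mixture kernel \eqref{eq:q_mask_mix} to recover $q(X_T)\prod_{t} q_\theta(X_{t-1}\mid X_t)$; your ancestral-sampling remark and the observation about the forward joint are harmless additions that the paper leaves implicit.
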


\subsection{Mask-augmented joint upper bound}
Our new joint upper bound can be defined similarly as
\(
\KL\big(q_\theta(X_0)\|p_B(X_0)\big)
\le
\KL\big(q_\theta(X_{0:T},m_{1:T})\|p(X_{0:T},m_{1:T})\big).
\)
Substituting~\eqref{eq:joint_p}--\eqref{eq:joint_q} and collecting terms yields the following decomposition, as derived in full in Appendix~\ref{appendix:loss_derivation}:
\begin{equation}
\resizebox{\linewidth}{!}{$
\begin{aligned}
&\tau\KL\big(q_\theta(X_{0:T},m_{1:T})\|p(X_{0:T},m_{1:T})\big) 
=
\underbrace{\E_{q_\theta}\left[H(X_0)\right]}_{\text{Energy}}
-\tau\underbrace{
\sum\nolimits_{t=1}^{T}
\E_{X_{T:t},m_{T:t}\sim q_\theta}
\left[
h\big(q_{\theta}(X_{t-1}\mid X_t,m_t)\big)
\right]
}_{\text{Entropy}} \\
&\quad
-\tau\underbrace{
\sum\nolimits_{t=1}^{T}
\E_{X_{T:t},m_{T:t}\sim q_\theta}
\left[
\E_{X_{t-1}}
\log p(X_t\mid X_{t-1},m_t)
\right]
}_{\text{Forward noise matching}} 
+\tau\underbrace{
\E_{X_{0:T},m_{1:T}\sim q_\theta}
\left[
\sum\nolimits_{t=1}^{T}
\log
\frac{q_{\theta,t}(m_t\mid X_t)}
     {p_t(m_t\mid X_{t-1})}
\right]
}_{\text{Mask distribution matching}}
+ C .
\end{aligned}$}
\label{eq:main_loss_func}
\end{equation}
The first term on the right hand side is the energy term which encourages the final sample from the diffusion process $X_0$ to have low energy as defined in \Cref{eq:cp_energy}. The second term contains the differential entropy $h(q_{\theta}(X_{t-1}\mid X_t,m_t))$ over $q_{\theta}(X_{t-1}\mid X_t,m_t)$ which encourages diverse samples at each step of diffusion from the model. The third term ensures that the new samples at each step of the diffusion procedure match the known forward noise distribution; when $p_t(\cdot\mid\cdot,m_t)$ is chosen from a tractable family, the inner expectation is available in closed form. The final term ensures the mask distributions from the forward and reverse processes match.

\section{Blocked Gibbs Diffusion Transformer for Constraint Optimization}
\label{sec:method_details}
This section describes the practical instantiation of our masked reverse-time chain.
We use a continuous relaxation of the discrete assignments and perform diffusion \emph{in logit space}.
Assuming all variables share a common domain size $K \triangleq |D^{(i)}|$, we represent the diffusion state at time $t$ as a matrix of logits $Z_t \in \mathbb{R}^{n\times K}$, where row $i$ corresponds to variable $x^{(i)}$ and column $k$ corresponds to the $k$-th discrete value.
The relaxed assignment is obtained by applying a row-wise softmax,
$
\tilde{X}_t = \mathrm{softmax}(Z_t)\in[0,1]^{n\times K},
$
so that each row $\tilde{x}_{t}^{(i,:)}$ lies on the simplex $\Delta^{K-1}$.
We apply the forward corruption and reverse denoising transitions to $Z_t$, while energies and constraints are evaluated on the corresponding relaxed variables $\tilde{X}_t$.
At sampling time, we output a discrete assignment via $\hat{x}^{(i)} = \arg\max_k \tilde{x}_0^{(i,k)}$. For notational convenience, we identify matrices in $\mathbb{R}^{n\times K}$ with their vectorizations in $\mathbb{R}^{nK}$ and omit explicit $\operatorname{vec}(\cdot)$ operators.

\paragraph{Transformer architecture.}
We instantiate the denoiser with the Transformer backbone of~\citet{xu2025selfsupervised}, but perform diffusion over continuous logits rather than variable assignments, similar to~\citet{chen2023analog}. Instead of outputting the next assignment, we output the mean and variance of the logits,  which we then decode into variable assignments. The full Transformer architecture including its positional encodings and self-attention mechanism are described in Appendix~\ref{app:transformer}.

\paragraph{Energy function.}
We design constraint-specific energies by defining continuous, differentiable penalties $\phi_j(\tilde{X})\ge 0$ such that
$
\phi_j(\tilde{X})=0 \iff c_j(\hat{X})=\texttt{true},
$
where $\hat{X}$ denotes the discretization induced by $\tilde{X}$ (e.g., row-wise argmax).
The overall energy is the weighted sum of the objective and penalties as in \Cref{eq:cp_energy}.

\paragraph{Noise distribution.}


We employ a masked Gaussian forward step kernel:
\(
p(Z_t \mid Z_{t-1}, m_t)
=
\mathcal{N}\Big(Z_{t-1},\sigma^2\mathrm{diag}(\tilde m_t)\Big),
\)
where $\sigma>0$ is fixed, and $\tilde m_t\in\{0,1\}^{n\times K}$ is the variable-level mask broadcast to the logit dimensions, i.e.,
$\tilde m_t^{(i,k)}=m_t^{(i)}$.
Equivalently,
$
Z_t = Z_{t-1} + (\tilde m_t \odot \epsilon_t)
$ where $\epsilon_t \sim \mathcal{N}(0,\sigma^2 I),
$
with unmasked coordinates being copied as-is. Repeated application yields a marginal variance that grows with $t$ on the masked coordinates.

We parameterize the masked reverse step with a diagonal-covariance Gaussian:
\[
q_{\theta}(Z_{t-1}\mid Z_t, m_t) = \mathcal{N}\big(\mu_{\theta}(Z_t,m_t),\mathrm{diag}(\sigma_{\theta}^2(Z_t,m_t))\big),
\]
where the model outputs $\mu_{\theta}\in\mathbb{R}^{n\times K}$ and $\log\sigma_{\theta}^2\in\mathbb{R}^{n\times K}$.
To enforce exact copying on unmasked coordinates, we set
$
\mu_{\theta}^{\neg \tilde m_t} = Z_t^{\neg \tilde m_t}, \sigma_{\theta}^{\neg \tilde m_t}=0,
$
and only sample the masked coordinates. The corresponding $X_{t-1}$ is obtained by applying per-variable softmax to $Z_{t-1}$.

\paragraph{Closed form for entropy and noise-matching term.}
The differential entropy of a diagonal Gaussian restricted to the masked coordinates admits the closed form
\[
h\left[q_{\theta}(Z_{t-1}\mid Z_t,m_t)\right]
=
\frac{1}{2}
\sum_{i=1}^{n}
\sum_{k=1}^{K}
m_t^{(i)}
\Big(
\log(2\pi e)
+
\log \sigma_{\theta,i,k}^2(Z_t,m_t)
\Big).
\]
where only masked dimensions contribute.
Under the masked Gaussian forward kernel,
the inner expectation in the forward noise-matching term has a closed form:
\[
\E_{q_{\theta}(Z_{t-1}\mid Z_t,m_t)}
\big[-\log p_t(Z_t\mid Z_{t-1},m_t)\big]
=
\frac{1}{2\sigma^2}\Big(
\|\tilde m_t \odot (Z_t-\mu_\theta)\|_2^2
+
\mathrm{tr}(\mathrm{diag}(\tilde m_t)\Sigma_\theta)
\Big)
+\mathrm{const}.
\]
where $\Sigma_\theta=\mathrm{diag}(\sigma_\theta^2(Z_t,m_t))$ is the covariance matrix and $\mathrm{const}$ does not depend on $\theta$.

\paragraph{Base distribution at $Z_T$.}
We initialize the chain from a fixed Gaussian base distribution:
$
Z_T \sim q(Z_T), q(Z_T)=\mathcal{N}(0, s^2 I),
$
where $s$ is a fixed hyperparameter (optionally with a constant bias on selected coordinates).
In constrained settings with given variable values (e.g., Sudoku, where some variables are fixed), we clamp the corresponding coordinates and sample the remaining ones,
which can be viewed as drawing from a conditional base distribution $q(Z_T \mid \text{givens})$.

\paragraph{Mask distribution.}
At each step $t \in [T]$, we sample a mask $m_t \in \{0,1\}^n$ indicating the subset of variables to update.
We choose a \emph{time-dependent, state-agnostic} mask schedule shared by forward and reverse processes, 

\begin{equation}
q_{\theta,t}(m_t\mid X_t) \equiv p_t(m_t\mid X_{t-1})\equiv \pi_t(m_t)
=\prod_{i=1}^n \mathrm{Bernoulli}\big(m_t^{(i)};\rho_t\big).
\label{eq:mask_bern}
\end{equation}
with masking rate $\rho_t$. We parameterize the block schedule by a minimum and maximum masking rate $[\rho_{\min},\rho_{\max}]$ and anneal $\rho_t$ over time to transition from large to small block updates; exact schedules are given in Appendix~\ref{app:mask_schedule}.
Because we use the same schedule for both forward and reverse mask variables, the last term in \Cref{eq:main_loss_func} vanishes.



\paragraph{Adaptive mask selection.}
\label{sec:method_adaptive}
\textcolor{black}{
A practical advantage of the blocked Gibbs formulation is that it exposes an explicit mask-selection step: at each reverse transition during inference, the sampler can decide which subset of variables to update. This makes it possible to replace random block selection with adaptive, state-dependent selection strategies that focus on variables most likely to benefit from updating. In contrast, a standard DiT updates all variables jointly at every denoising step and therefore has no analogous mechanism for prioritizing variables.}
\textcolor{black}{
We use three representative strategies. First is a confidence-margin strategy, inspired by adaptive token selection in masked diffusion~\cite{kim2025train}, which prioritizes variables for which the model is uncertain. The second and third are inspired by classical Large Neighborhood Search~\cite{lnsbook,shaw1998using}: a violation-based strategy, which prioritizes variables involved in violated constraints, and a related-variable strategy, which selects variables that participate in a common constraint.
 We treat the choice of selection strategy as a hyperparameter and report the best-performing strategy for each benchmark; formal definitions are provided in Appendix~\ref{app:adaptive_sampling}.
}

\section{Experiments}
The experiments seek to answer the following research questions:
\textbf{RQ1:} Is diffusion a useful learning paradigm compared to simpler iterative Transformer baselines?
\textbf{RQ2:} Does blocked Gibbs diffusion improve over standard diffusion Transformers?
\textbf{RQ3:} Does using the more general Transformer architecture degrade performance relative to methods that are specialized for binary problems?
\textbf{RQ4:} Can \methodname{} outperform non-learned state-of-the-art methods? Our code is available at \url{https://github.com/khalil-research/BloGDiT}.



\subsection{Problem selection}

We evaluate \methodname{} on a diverse set of CSP/COP benchmarks from the literature, following their respective evaluation conventions. For each problem, we define a CP formulation (Appendix~\ref{app:cpform}) and a continuous relaxation penalty for each constraint (\Cref{tab:loss_relaxed}), resulting in a problem-specific energy for training. For COPs, we use a weighted sum of the constraint penalties and the objective (cf. \cref{eq:cp_energy}).

\begin{wraptable}{r}{0.5\linewidth}
    \centering
    \caption{
    Discrete constraints $c(X)$ used in our four problems and their continuous penalty functions, $\phi_c(X)$. In the continuous penalties, each discrete variable assignment $x^{(i)}$ is represented by a probability vector approximating its one-hot encoding, with entries $\tilde{x}^{(i,k)}\in[0,1]$ for $k\in[K]$, where $K=|D^{(i)}|$ is the common domain size.}
    \resizebox{\linewidth}{!}{%
    \begin{tabular}{ll}
        \toprule
        \textbf{Discrete Constraint $c(X)$} & \textbf{Continuous Penalty $\phi_c(\tilde{X})$} \\ \midrule

        $\textsc{AllDifferent}_{m=n}(x^{(1)}, \ldots, x^{(n)})$ & $\sum_{k=1}^{K} \left( \left |1 - \sum_{i=1}^n \tilde{x}^{(i,k)} \right | \right)$ \\ \midrule

        $x^{(i)} \neq x^{(j)}$ & $\sum_{k=1}^{K}(\tilde{x}^{(i,k)} \cdot \tilde{x}^{(j,k)})$ \\ 
        \midrule

        $\forall (i,j)\in E:\; x^{(i)} + x^{(j)} \le 1$
        &
        $\sum_{(i,j)\in E}\left(\tilde{x}^{(i,1)} \cdot \tilde{x}^{(j,1)}\right)$ \\
        \bottomrule
    \end{tabular}
    }
    \label{tab:loss_relaxed}
\end{wraptable}

\textbf{Sudoku} is a CSP defined on a $9\times 9$ grid with variables taking values in $\{1,\dots,9\}$.
The goal is to fill the empty cells such that each row, column, and $3\times 3$ sub-grid contains each digit exactly once. A single instance consists of a partially filled board (``givens'') and a completion that satisfies all constraints.
Instance difficulty is determined by the number and placement of givens. We encode Sudoku using $\textsc{AllDifferent}$ constraints over rows, columns, and sub-grids. Following \citet{xu2025selfsupervised}, we train and evaluate on the SATNet dataset~\cite{satnet} of instances with 31--42 given cells, and test for \textit{out-of-distribution} (OOD) generalization on the RRN dataset~\cite{rrn} of harder instances with 17--34 given cells.

\textbf{Graph Coloring} asks for an assignment of $k$ colors to vertices such that adjacent vertices have different colors. We generate two sets of instances for $k=5$ and $k=10$. Training graphs have 50 vertices for $k=5$ and 100 vertices for $k=10$. To test OOD generalization, we evaluate on larger graphs with 100 vertices for $k=5$ and 200 vertices for $k=10$. We use pairwise not-equal constraints $x^{(i)} \neq x^{(j)}$ for each edge $(i,j)$ to capture the coloring requirement.

\textbf{MaxCut} is a COP that partitions the vertices of a graph into two sets such that the total count of cut edges is maximized. We train on graphs with 50 vertices and evaluate large-scale generalization on benchmark instances from the GSET dataset~\cite{ye2003gset}, which consists of graphs with 800--10,000 vertices. Following existing works, we report the gap to the best known cut sizes~\cite{matsuda2019benchmarking}.

\textbf{Maximum Independent Set (MIS)} is a COP that seeks the largest subset of vertices with no pairwise edges between them. Following~\citet{sanokowski2024a}, we generate graphs using the RB-Model~\cite{rbmodel}. We report results on RB-small and RB-large with graph sizes between 200-300 and 800-1200 nodes.


\subsection{Results}

\Cref{tab:combined_results} summarizes our main results across problems. \textcolor{black}{Baseline results are taken from the corresponding prior work or reproduced where necessary (cf. Appendix~\ref{app:baselines}). We ensure fair comparisons following the conventions of the different benchmarks (cf. Appendix~\ref{app:efficiency}).} The best and second-best values are bolded and underlined, respectively. DiT denotes a vanilla diffusion Transformer with standard Gaussian denoising over all variables. \methodname{} denotes our method, using blocked Gibbs diffusion with adaptive mask selection as described in \Cref{sec:method_adaptive}. OR-Tools is a state-of-the-art exact CP solver and serves as a strong baseline~\cite{cpsat}, achieving 100\% on Sudoku instances. OR-Tools uses tree search along with sophisticated heuristics and thus should be seen as representative of the best algorithmic practices from the CP community; indeed, it has won gold in the international constraint programming competition every year since 2013~\cite{stuckey2014minizinc}. More details on the baselines can be found in Appendix~\ref{app:baselines}.

\begin{table}[t]
\centering
\caption{
Performance comparison.
Higher is better for MIS (average objective value), Sudoku, and Graph Coloring (\% of instances solved). 
Lower is better for MaxCut (average absolute gap to best-known solution).
Bold indicates the best result and underline indicates the second-best result.
}
\label{tab:combined_results}
\resizebox{0.97\textwidth}{!}{
\begin{tabular}{@{}c@{\hspace{1.2em}}c@{}}
\\ \toprule
\resizebox{0.37\textwidth}{!}{
\begin{tabular}[t]{@{}lcc@{}}
\textbf{Sudoku} & Easy $\uparrow$ & Hard $\uparrow$ \\
\midrule
OR-Tools & \textbf{100} & \textbf{100} \\
\midrule
SATNet & 98.3 & 3.2 \\
RRN & 99.8 & 28.6 \\
\citet{yang2023learning} & \textbf{100} & 32.9 \\
IRED & 99.4 & 62.1 \\
\textcolor{black}{RLSA} & 4.2 & 1.8 \\
ConsFormer & \textbf{100} & 77.7 \\
\midrule
DiT & \textbf{100} & 48.6 \\
\methodname{} & \textbf{100} & \underline{94.1} \\
\end{tabular}
}
&
\resizebox{0.7\textwidth}{!}{
\begin{tabular}[t]{@{}lcccc@{}}
\multirow{2}{*}{\textbf{Graph Coloring}}& \multicolumn{2}{c}{5 Colors}
& \multicolumn{2}{c}{10 Colors} \\
\cmidrule(lr){2-3} \cmidrule(lr){4-5}
& $n=50$ $\uparrow$ & $n=100$ $\uparrow$
& $n=100$ $\uparrow$ & $n=200$ $\uparrow$ \\
\midrule
OR-Tools & \textbf{83.08} & \textbf{57.16} & 52.41 & 10.25 \\
\midrule
Greedy & 32.42 & 0.00 & 0.75 & 0.00\\
Feasibility Jump & 82.83 & 54.50 & 35.66 & 6.00 \\ 
ANYCSP & 79.17 & 34.83 & 0.00 & 0.00 \\
\textcolor{black}{RLSA} & 37.83 & 0.42 & 10.75 & 2.50 \\
ConsFormer & 81.00 & 47.33 & \underline{52.60} & 11.92 \\
\midrule
DiT & 79.75 & 48.91 & 50.50 & \underline{14.25} \\
\methodname{} & \underline{83.00} & \underline{54.67} & \textbf{53.92} & \textbf{18.58} \\
\end{tabular}
}
\\ \midrule
\midrule
\resizebox{0.37\textwidth}{!}{
\begin{tabular}[t]{@{}lcc@{}}
\textbf{MIS} & RB-small $\uparrow$ & RB-large $\uparrow$ \\
\midrule
OR-Tools & \textbf{20.10} & \textbf{42.66} \\
\midrule
LwD & 19.01 & 32.32 \\
INTEL & 18.47 & 34.47 \\
DGL & 17.36 & 34.50 \\
LTFT & 19.18 & 37.48 \\
\textcolor{black}{DIFUSCO} & 18.52 & -- \\
\textcolor{black}{RLSA} & \underline{19.97} & \underline{40.19} \\
DiffUCO & 18.88 & 38.10 \\
\midrule
DiT & 8.05 & 0.29 \\
\methodname{} & 19.52 & 37.05 \\
\end{tabular}
}
&
\resizebox{0.7\textwidth}{!}{
\begin{tabular}[t]{@{}lcccc@{}}
\textbf{MaxCut} & $|V|=800$ $\downarrow$ & $|V|=1K$ $\downarrow$
& $|V|=2K$ $\downarrow$ & $|V|\geq3K$ $\downarrow$ \\
\midrule
OR-Tools & 143.89 & 112.78 & 365.89 & 378.62 \\
\midrule
RUNCSP & 185.89 & 156.56 & 357.33 & 401.00 \\
ECO-DQN & 65.11 & 54.67 & 157.00 & 428.25 \\
ECORD & 8.67 & 8.78 & 39.22 & 187.75 \\
ANYCSP & \underline{1.22} & \underline{2.44} & 13.11 & 51.63 \\
\textcolor{black}{RLSA} & 3.22 & 4.00 & \underline{13.0} & 66.75 \\
DiffUCO & 4.11 & 6.33 & 31.67 & 116.75 \\
ConsFormer & 16.33 & 12.44 & 52.11 & 115.25 \\
\midrule
DiT & 56.11 & 50.00 & 101.33 & 162.00 \\
\methodname{} & \textbf{0.11} & \textbf{1.33} & \textbf{7.88} & \textbf{32.75} \\
\end{tabular}
}
\\
\bottomrule
\end{tabular}
}
\end{table}

\paragraph{RQ1: Diffusion vs. iterative Transformers.}
We compare with ConsFormer~\cite{xu2025selfsupervised} as the baseline (non-diffusion) iterative Transformer model which achieved state-of-the-art result on OOD Sudoku tasks. 
\methodname{} consistently improves over ConsFormer, with the largest gains in harder OOD settings: Sudoku OOD improves from 77.7\% to 94.1\%, and Graph Coloring improves across both $k=5$ and $k=10$, especially OOD. On MaxCut, \methodname{} also substantially improves over ConsFormer across all GSET size buckets. 
These results suggest that the diffusion objective provides a more principled framework for learning iterative refinement heuristics, by grounding training in a reverse-KL objective toward a Boltzmann target rather than directly training a Transformer to improve assignments.

\textbf{RQ2: Blocked Gibbs vs. standard diffusion.}
We compare against the standard DiT baseline, which uses the same Transformer backbone but applies Gaussian denoising to all variables at each step. This isolates the effect of replacing global denoising with block-wise resampling. While DiT is competitive in some in-distribution settings, it degrades more sharply on harder instances: Sudoku OOD drops to 48.6\% compared to 94.1\% for \methodname{}, and MIS reaches average independent set sizes of only $8.05$ and $0.29$ on RB-small and RB-large, compared to $19.52$ and $37.05$ for \methodname{}. \methodname{} also improves over DiT on OOD Graph Coloring and reduces MaxCut gaps across all GSET sizes. These trends support our hypothesis that constraint optimization benefits from targeted block-wise resampling, where most variables can be preserved while selected variables are substantially revised.

\textbf{RQ3: Transformer vs graph-specialized baselines.}
We compare against learned baselines for binary graph optimization, such as DIFUSCO~\cite{sun2023difusco} and DiffUCO~\cite{sanokowski2024a}, which incorporate graph-specific inductive biases. These methods are designed for binary graph problems, so this comparison tests whether our more general Transformer-based framework sacrifices performance for broader applicability. On MIS, \methodname{} performs best on RB-small but slightly trails DiffUCO on RB-large. In contrast, on MaxCut, \methodname{} outperforms DiffUCO across all graph sizes. 
These results suggest that \methodname{} can remain competitive on binary graph optimization while also supporting non-binary CSPs such as Sudoku and Graph Coloring.

\textbf{RQ4: Comparison with non-learned solvers.}
\textcolor{black}{
We compare \methodname{} against state-of-the-art non-learned solvers, including OR-Tools~\cite{cpsat} and RLSA~\cite{feng2025regularized}. On Sudoku, OR-Tools achieves $100\%$ accuracy on both splits and outperforms all learned methods. Nevertheless, \methodname{} matches OR-Tools on the easy split and achieves the strongest learned-method result on the harder OOD split, with $94.1\%$ accuracy. On Graph Coloring, \methodname{} is competitive with OR-Tools for $k=5$ and outperforms it for $k=10$, improving accuracy from $10.25\%$ to $18.58\%$ on the hardest instances. RLSA performs poorly on both Sudoku and Graph Coloring, which is expected since it is designed primarily for binary optimization rather than non-binary CSPs. On MIS, OR-Tools and RLSA remain stronger than learned approaches.
On MaxCut, however, \methodname{} outperforms both OR-Tools and RLSA across all GSET size buckets. Overall, no method is universally best. \methodname{} provides a learned 
heuristic that performs strongly across both non-binary CSPs and large-scale binary graph optimization tasks, remaining competitive with or improving over non-learned baselines.
}

\subsection{Ablations}

\paragraph{Masking rate.}
We ablate the mask schedule by sweeping the masking rates $\rho_{\max}$ and $\rho_{\min}$. The results are shown in Table~\ref{tab:mask-schedule-ablation}. The main takeaway is that the schedule should start with large updates and gradually anneal to smaller updates. Small initial update ratios, such as $\rho_{\max}\leq 0.5$, lead to poor performance, suggesting that large early updates are needed to make global changes. 
At the same time, keeping the update ratio too large until the end is also harmful for generalization: $(\rho_{\max},\rho_{\min})=(0.9,0.7)$ performs well on RB-Small but collapses on RB-Large, indicating that late-stage updates should become smaller to allow fine-grained repair.

\begin{table*}[t]
\centering

\begin{minipage}[t]{0.58\textwidth}
\centering
\caption{
Ablation results.
$-$AMS uses random mask selection. $-$Entropy trains without the entropy term.
Sudoku and Graph-Coloring values are \% solved ($\uparrow$).
}
\label{tab:ablations_big}
\resizebox{\linewidth}{!}{
\begin{tabular}{lccc}
\toprule
\textbf{Dataset} 
& \textbf{\methodname{}} 
& \textbf{ $-$ AMS} 
& \textbf{ $-$ Entropy} \\
\midrule
Sudoku-Easy 
& \textbf{100.0} 
& \textbf{100.0} 
& \textbf{100.0} \\

Sudoku-Hard 
& \textbf{94.12} 
& 88.21
& 91.92 \\

Coloring-5 ($n=50$) 
& \textbf{83.00}
& 82.92
& \textbf{83.00} \\

Coloring-5 ($n=100$) 
& \textbf{54.67} 
& 54.58 
& 53.42 \\

Coloring-10 ($n=100$) 
& \textbf{53.92} 
& \textbf{53.92} 
& \textbf{53.92} \\

Coloring-10 ($n=200$) 
& \textbf{18.58} 
& 17.42 
& 18.25 \\
\midrule
MIS-RB-small (size $\uparrow$) 
& \textbf{19.52} 
& 15.83 
& 18.99 \\

MIS-RB-large (size $\uparrow$) 
& \textbf{37.05} 
& 17.16 
& 35.03 \\
\midrule
MaxCut-800 (gap $\downarrow$) 
& \textbf{0.11} 
& 2.11 
& 1.22 \\

MaxCut-1K (gap $\downarrow$) 
& \textbf{1.33} 
& 3.22 
& 2.67 \\

MaxCut-2K (gap $\downarrow$) 
& \textbf{7.88} 
& 18.89 
& 13.33 \\

MaxCut-$\ge$3K (gap $\downarrow$) 
& \textbf{32.75} 
& 88.63 
& 61.00 \\
\bottomrule
\end{tabular}
}
\end{minipage}
\hfill
\begin{minipage}[t]{0.39\textwidth}
\centering
\caption{Masking rate ablation on MIS. The masking rate is annealed from $\rho_{\max}$ to $\rho_{\min}$ following a geometric schedule (cf. Appendix \ref{app:mask_schedule}). Values are mean independent set sizes.}
\label{tab:mask-schedule-ablation}
\resizebox{\linewidth}{!}{
\begin{tabular}{ccrr}
\toprule
$\rho_{\max}$ & $\rho_{\min}$ & RB-Small & RB-Large \\
\midrule
0.3 & 0.1 & 5.45 & 6.78 \\
0.5 & 0.1 & 5.21 & 9.93 \\
0.5 & 0.3 & 4.78 & 6.14 \\
0.7 & 0.1 & 12.46 & 33.17 \\
0.7 & 0.3 & 14.80 & 29.30 \\
0.7 & 0.5 & 17.01 & 31.05 \\
0.9 & 0.1 & 18.67 & 34.62 \\
0.9 & 0.3 & \textbf{19.52} & \textbf{37.05} \\
0.9 & 0.5 & \underline{19.49} & \underline{35.41} \\
0.9 & 0.7 & 18.58 & 4.24 \\
\bottomrule
\end{tabular}
}
\end{minipage}

\end{table*}

\paragraph{Sampling steps.}
We ablate the number of reverse-time sampling steps used at test time while keeping training and all other hyperparameters fixed.
We run \methodname{} for $T\in\{25,50,100,200,500,1000,2000\}$ steps and report the resulting solution accuracy on the Sudoku evaluation set. \Cref{fig:sampling_steps} shows that performance improves rapidly in early iterations and exhibits diminishing returns beyond a few hundred steps, i.e., most of the quality gains are achieved with relatively short sampling trajectories, suggesting a trade-off between computation cost and solution quality.

\begin{figure}
    \centering
    \includegraphics[width=0.7\linewidth]{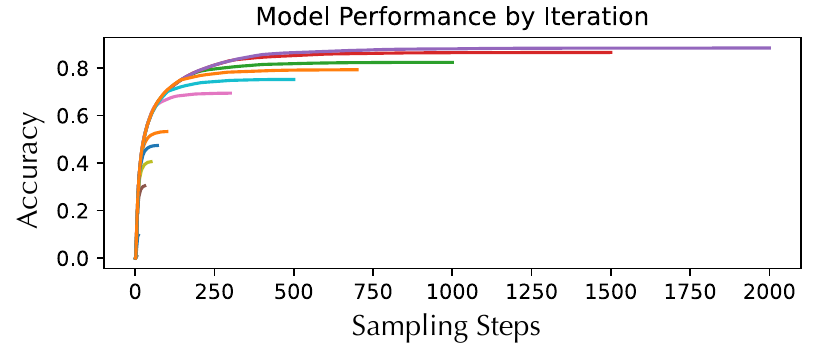}
    \caption{Sudoku test-time accuracy as a function of the number of sampling steps $T$. Each curve corresponds to a different number of sampling steps with the same model.}
    \label{fig:sampling_steps}
\end{figure}

\textbf{Adaptive mask selection.}
\textcolor{black}{
\Cref{tab:ablations_big} ablates adaptive mask selection (AMS) by replacing it with random selection while keeping the same masking rate. AMS consistently improves performance, with the largest gains appearing on harder optimization settings. On Sudoku-Hard, AMS improves accuracy from $88.21\%$ to $94.12\%$. On Graph Coloring, the gains are smaller but consistent. The effect is much larger on MIS and MaxCut: on MIS-RB-large, AMS improves the average independent set size from $17.16$ to $37.05$, and consistently reduces the average optimality gap for all MaxCut sizes. These results indicate that blocked diffusion benefits substantially from selecting which variables to update, rather than choosing blocks blindly.
}

\textbf{Role of entropy.}
\label{sec:method_greedy}
\textcolor{black}{
Existing CSP/COP benchmarks often reward finding a single high-quality solution, rather than producing a diverse set of samples. This raises a natural question about the role of the entropy term in our training objective (\cref{eq:main_loss_func}), since this term encourages sample diversity. As \citet{sanokowski2024a} note, annealing the Boltzmann temperature toward zero shifts the objective toward energy minimization, reducing the relative importance of entropy in the low-temperature limit.
To test whether entropy remains useful in this optimization-focused setting, we train an ablated version of \methodname{} with the entropy term removed. We find that removing entropy generally degrades performance relative to the full objective (\cref{tab:ablations_big}). This suggests that, although entropy is not part of the final energy objective used to evaluate solution quality, it plays an important regularizing role during diffusion training by maintaining exploration and preventing premature collapse to suboptimal regions.
}


\section{Related work}



\paragraph{Constraint Programming solvers.}
Classical constraint programming provides expressive modeling abstractions for finite-domain variables, global constraints, and propagation-based search~\citep{cp-handbook}. Global constraints such as \textsc{AllDifferent} are especially important because they compactly encode structured combinatorial relations and admit specialized filtering algorithms~\citep{alldifferent,globalconstraints}. Modern solvers such as CP-SAT combine constraint propagation, SAT-style reasoning, integer programming techniques, and highly engineered heuristics~\citep{cpsat}. These solvers are extremely effective, but their proposal and repair mechanisms are largely hand-designed and are applied anew to each instance. 

\paragraph{Iterative methods for constraint optimization.}
Constraint optimization problems are often solved heuristically through iterative procedures that repeatedly propose, evaluate, and modify candidate assignments. A central classical example of iterative refinement is Large Neighborhood Search (LNS)~\citep{lnsbook,shaw1998using}.
LNS maintains an incumbent solution and alternates between a destroy step and a repair step. The destroy operator removes or unassigns selected components of the current solution, and the repair operator reconstructs the missing components to produce a new complete assignment. 
This view is closely aligned with our motivation: in many constraint problems, once an assignment is nearly feasible, progress often requires changing a selected subset of variables while preserving the rest. \methodname{} borrows this intuition by masking a block of variables and denoising that block conditional on the unmasked variables. 

Another line of work implements discrete sampling for solving combinatorial optimization problems. \citet{sun2023revisiting} show that modern gradient-based discrete MCMC and parallel neighborhood
exploration can produce competitive training-free heuristics for several CO problems. More recently,
\citet{feng2025regularized} propose Regularized Langevin Dynamics, which modifies discrete
Langevin dynamics and achieves state-of-the-art results on several binary problems.

A broad line of work learns iterative neural heuristics for discrete reasoning and optimization. Many graph-based methods refine assignments through message passing and have been applied to constraint reasoning tasks~\citep{rrn,runcsp,anycsp}. \citet{yang2023learning} and \citet{xu2025selfsupervised,xu2026cpaior} implement recurrent Transformers as iterative heuristics. 

\paragraph{Diffusion and energy-based methods as iterative heuristics.}
Diffusion models define generative processes through gradual noising and learned denoising~\citep{sohl2015deep,ho2020denoising}. Discrete diffusion extends this idea to categorical and structured domains~\citep{austin2021structured}, while continuous relaxations such as analog bits generate discrete variables by diffusing in a continuous representation space~\citep{chen2023analog}. Diffusion Transformers replace specialized denoisers with Transformer backbones and have shown strong scaling behavior in generative modeling~\citep{peebles2023dit}.  

Recent work has adapted diffusion models to combinatorial optimization. DIFUSCO uses graph-based
diffusion for graph optimization problems~\citep{sun2023difusco}, and DISCO studies efficient diffusion heuristics for large-scale combinatorial optimization~\citep{zhao2024disco}. DiffUCO formulates unsupervised neural combinatorial optimization as approximate sampling from a Boltzmann distribution, thereby avoiding the need for supervised solution labels~\citep{sanokowski2024a}; later work develops scalable discrete diffusion samplers for combinatorial optimization and statistical physics~\citep{sanokowski2025scalable}. Energy-based diffusion generators have also been used~\citep{wang2025energy,ired}.

BloGDiT addresses a different failure mode: standard diffusion denoises the full assignment at every step, whereas constraint solving often benefits from large, targeted changes to a subset of variables. BloGDiT therefore replaces global denoising with blocked conditional resampling.

\paragraph{Masked, blocked, and Gibbs-style generation.}
Classical Gibbs sampling updates variables, or blocks of variables, conditional
on the rest of the current state~\citep{geman1984stochastic,gelfand1990sampling,jensen1995blocking}.
This is natural for constraint solving because the unmodified part of a nearly feasible solution provides
useful context for repairing the violated part.

Several recent generative models also use partial updates. SDEdit shows that re-noising and
re-denoising can perform controlled edits in continuous domains~\citep{meng2022sdedit}. Masked
diffusion language models generate discrete sequences through iterative masking and denoising~\citep{sahoo2024simple},
and remasking improves inference-time scaling by deciding which variables should be revised~\citep{wang2025remasking}.
Work on token ordering in masked diffusion further shows that the choice of variables to reveal
or update can strongly affect generation quality~\citep{kim2025train}. These ideas are conceptually
aligned with BloGDiT's adaptive mask selection, where variables are prioritized according to model
uncertainty or constraint violation.

Two recent methods are particularly close and are useful points of contrast. Interleaved Gibbs
Diffusion performs Gibbs-style conditional updates for constrained generation~\citep{anil2025interleaved},
but updates one variable at a time and uses supervised training. BloGDiT instead updates a block
at each iteration and trains without labels using a Boltzmann objective. Block Diffusion is also related,
but it is a language modeling method whose blocks are fixed sequential segments of a token sequence;
it runs a complete diffusion process within each block before moving to the next~\citep{arriola2025block}.
In contrast, BloGDiT samples a different block at each diffusion step, anneals the block size over time,
and uses each block update as a learned repair move for constraint optimization.

\section{Conclusion}

We introduced \methodname{}, a Blocked Gibbs Diffusion Transformer for unsupervised constraint satisfaction and optimization via Boltzmann sampling. Our key insight is that near-feasible solutions typically require sparse edits to a subset of variables, which is poorly matched to standard diffusion updates that denoise all variables at every step. \methodname{} addresses this mismatch by performing block-wise resampling moves with an annealed block-size schedule, enabling global exploration early and localized refinement late. Across four benchmarks, \methodname{} matches or improves over strong neural baselines, suggesting that Gibbs-style conditional updates provide a useful inductive bias for Transformer-based constraint reasoning.

\paragraph{Limitations and future work.}
\label{sec:future}
An important next step is to incorporate scalability techniques such as those of \citet{sanokowski2025scalable} to reduce training and sampling cost. We operated in the continuous logit space in this work; further investigation of discrete diffusion methods that act directly on the discrete domain, as in \citet{sanokowski2024a}, would be insightful. In addition, while our approach is motivated by minimizing the reverse KL to a Boltzmann target and produces high-quality approximate samples in practice, it does not provide exact sampling guarantees. Future work could follow recent trends in diffusion~\cite{du2023reduce,sjoberg2023mcmc} and add an MCMC-style correction (e.g., a Metropolis–Hastings accept–reject step~\cite{hastings1970monte}) to more faithfully target the true Boltzmann distribution.

\clearpage


\bibliographystyle{plainnat}
\bibliography{references}


\newpage
\appendix



\section{Loss derivation}
\label{appendix:loss_derivation}

To derive the objective detailed in \Cref{eq:main_loss_func}, we first expand the reverse KL divergence over the joint distribution:
\begin{align}
&\KL\big(q_\theta(X_{0:T},m_{1:T})\|p(X_{0:T},m_{1:T})\big).\\
& \nonumber \quad = \E_{q_\theta}\left[
\log q(X_T) - \log p_B(X_0) \right] \\
& \nonumber \qquad + \sum_{t=1}^T \E_{q_\theta}\left[
\log q_\theta(X_{t-1}\mid X_t,m_t)
- \log p(X_t\mid X_{t-1},m_t)\right] \\
&\qquad + \sum_{t=1}^T \E_{q_\theta}\left[
\log q_{\theta,t}(m_t\mid X_t) - \log p_t(m_t \mid X_{t-1})\right]\\
& \nonumber \quad = - \; \underbrace{\log p_B(x_0)}_{\text{(1) Energy}} \\
&\qquad +\underbrace{\E_{X_{0:T},m_{1:T} \sim q_\theta}\left[\sum_{t=1}^{T}
\log q_{\theta}(X_{t-1}\mid X_t, m_t)\right]}_{\text{(2) Reverse entropy / reconstruction}} \\
&\qquad-\underbrace{\E_{X_{0:T},m_{1:T} \sim q_\theta}\left[\sum_{t=1}^{T}
\log p(X_t\mid X_{t-1}, m_t)\right]}_{\text{(3) Forward noise matching}} \\
&\qquad+\underbrace{\E_{X_{0:T},m_{1:T} \sim q_\theta}\left[\sum_{t=1}^{T}
\log \frac{q_{\theta,t}(m_t\mid X_t)}{p_t(m_t\mid X_{t-1})}\right]}_{\text{(4) Mask divergence}} +\; C,
\label{eq:joint_decomp}
\end{align}
where $C$ absorbs $\E_{q_\theta}[\log q(X_T)]$, constant w.r.t.\ $\theta$. We now break down each term further:

%

\textbf{Term (1): Energy}
Using $\log p_B(X_0) = -\beta H(X_0) - \log Z(\beta)$, we can isolate the energy term:
\begin{align}
-\E_{q_\theta}[\log p_B(X_0)]
&= \beta\E_{q_\theta}[H(X_0)] + \log Z(\beta). \label{eq:energy_term}
\end{align}
where $\log Z(\beta)$ can be absorbed into C as it is constant w.r.t.\ $\theta$, and $\beta=\frac{1}{\tau}$ can be removed by multiplying $\tau$ on both sides of the equation.

\textbf{Term (2): Reverse entropy / reconstruction.}
\begin{align}
&\E_{q_\theta}\left[\sum_{t=1}^{T} \log q_{\theta}(X_{t-1}\mid X_t, m_t)\right] \\
&=
\sum_{t=1}^{T}
\E_{X_{T:t-1},m_{T:t}\sim q_\theta}\left[
\log q_{\theta}(X_{t-1}\mid X_t,m_t)
\right]\\
&=
\sum_{t=1}^{T}
\E_{X_{T:t},m_{T:t}\sim q_\theta}\left[
\E_{X_{t-1}\mid X_t,m_t}
\log q_{\theta}(X_{t-1}\mid X_t,m_t)
\right]\\
&=
- \sum_{t=1}^{T}
\E_{X_{T:t},m_{T:t}\sim q_\theta}\left[
S(q_{\theta}(X_{t-1}\mid X_t,m_t))
\right].
\end{align}
Where $S(q_{\theta}(X_{t-1}\mid X_t,m_t))$ is the entropy over $q_{\theta}(X_{t-1}\mid X_t,m_t)$.

\textbf{Term (3): Forward noise matching.}
\begin{align}
&\E_{q_\theta}\left[\sum_{t=1}^{T} \log p(X_t\mid X_{t-1}, m_t)\right]\\
&=
\sum_{t=1}^{T}
\E_{X_{T:t-1},m_{T:t}\sim q_\theta}\left[
\log p(X_t\mid X_{t-1},m_t)
\right].\\
&=
\sum_{t=1}^{T}
\E_{X_{T:t},m_{T:t}\sim q_\theta}\left[
\E_{X_{t-1}\mid X_t,m_t}
\log p(X_t\mid X_{t-1},m_t)
\right].
\end{align}

\section{Invariant marginal distribution}
\label{appendix:aux_proof}

To validate our approach, we must prove that the marginal distribution of the trajectory $X_{0:T}$ under this augmented system is identical to the target diffusion process $q_\theta(X_{0:T})$ defined in Eq. (\ref{eq:joint_q_diffuco}).
Recall our proposition from \cref{sec:aux_mask}:

\textbf{Proposition 3.1} (Sampling equivalence under mask augmentation).
\textit{
Let $(X_{0:T},m_{1:T})$ be sampled from the augmented joint distribution~\eqref{eq:joint_q}. By discarding $m_{1:T}$, the resulting trajectory
$X_{0:T}$ is distributed as the original process $q_\theta(X_{0:T})$ induced by the
mask-augmented transitions~\eqref{eq:q_mask_mix}:
\(
X_{0:T} \sim q_\theta(X_{0:T})
\Longleftrightarrow
\sum_{m_{1:T}} q_\theta(X_{0:T},m_{1:T}) = q_\theta(X_{0:T}).
\)
}

\begin{proof}
We recover the marginal density of $X_{0:T}$ by summing the augmented joint distribution over all possible realizations of the auxiliary mask sequence $\mathbf{m} = \{m_1, \dots, m_T\}$:
\begin{align}
& \sum_{\mathbf{m}} q_{\theta}(X_{0:T}, \mathbf{m}) \\
& \quad = \sum_{m_1, \dots, m_T} \left[ q(X_T) \prod_{t=1}^T q_{\theta,t}(m_t \mid X_t) q_{\theta}(X_{t-1} \mid X_t, m_t) \right].
\end{align}
Because the mask $m_t$ at step $t$ is conditionally independent of masks at other steps given $X_t$, the summation over the sequence factorizes across time steps. We can interchange the summation and the product:
\begin{align}
&= q(X_T) \prod_{t=1}^T \left[ \sum_{m_t} q_{\theta,t}(m_t \mid X_t) q_{\theta}(X_{t-1} \mid X_t, m_t) \right].
\end{align}
The term inside the brackets corresponds exactly to the definition of the mask-augmented transition kernel in Eq. (\ref{eq:q_mask_mix}). Substituting this back yields:
\begin{align}
&= q(X_T) \prod_{t=1}^T q_{\theta}(X_{t-1} \mid X_t) \\
&\equiv q_\theta(X_{0:T}).
\end{align}
Therefore, discarding the auxiliary variables $m_{1:T}$ yields a sequence distributed exactly according to the original target $q_\theta(X_{0:T})$.
\end{proof}

\section{Mask annealing schedule}
\label{app:mask_schedule}

We anneal the masking rate $\rho_t\in[\rho_{\min},\rho_{\max}]$ over steps $t\in\{0,\dots,T\}$. We experiment with two schedules:

\paragraph{Linear:}
\[
\rho_t = \rho_{\min} + (\rho_{\max}-\rho_{\min})\frac{t}{T}.
\]

\paragraph{Geometric:}
\[
\rho_t = \rho_{\min}\left(\frac{\rho_{\max}}{\rho_{\min}}\right)^{t/T}.
\]

\section{Adaptive Mask Selection}
\label{app:adaptive_sampling}
We replace the state-agnostic masking distribution in \Cref{eq:mask_bern} with a \emph{state-dependent} strategy at inference time.
Concretely, instead of masking each variable with a common rate $\rho_t$, we use per-variable rates $\rho_{t}^{(i)}$ that depend on the current model output.
We normalize the rates to preserve the expected mask budget:
$
\frac{1}{d}\sum_{i=1}^d \rho_{t}^{(i)} \approx \rho_t.
$
We then sample masks independently,
\begin{equation}
\label{eq:adaptive_bernoulli}
\pi_t(m_t) = \prod_{i=1}^d \mathrm{Bernoulli}(m_{t}^{(i)}; \rho_{t}^{(i)}) .
\end{equation}

\paragraph{Top-probability margin.}



Following \citet{kim2025train}, we estimate the certainty of a variable using the
margin between its two most likely values. Let
$Z_t^{(i)}$ denote the model's output logits for variable $i$
at reverse step $t$. We first define
\(
q_\theta(x^{(i)} = j \mid X_t) = \mathrm{softmax}(Z_t^{(i)})_j .
\)
Then, let $j_{1}^{(i)}$ and $j_{2}^{(i)}$ be the most- and second-most likely values under
$q_\theta(x^{(i)} \mid X_t)$. The top-probability margin certainty score is
\begin{equation}
\label{eq:top_prob_margin}
c_t^{(i)}
=
\left|
q_\theta(x^{(i)} = j_{1}^{(i)} \mid X_t)
-
q_\theta(x^{(i)} = j_{2}^{(i)} \mid X_t)
\right| .
\end{equation}
A small value of $c_t^{(i)}$ indicates that the model is uncertain between two competing assignments for variable $i$.
We then bias mask sampling by replacing the state-agnostic rate $\rho_t$ with a state-dependent
masking rate $\rho_{t}^{(i)}(X_t)$ (dropping the explicit $X_t$ for readability). Concretely, we set
\[
\rho_{t}^{(i)} \propto -c_t^{(i)}
\]

This encourages repeatedly revisiting variables where the model has low confidence and may benefit from additional refinement.

\paragraph{Most-critical variables.}


Motivated by Large Neighborhood Search~\cite{lnsbook}, we also consider a mask-selection strategy that focuses updates on variables currently involved in constraint violations. At each step, we compute a violation score $v_i$ for every variable based on the current assignment by counting how much that variable contributes to the violated constraints. For example, in graph coloring, a vertex receives a higher score if many of its incident edges have both endpoints assigned the same color.

We then sample the mask with probabilities proportional to these violation scores, normalized to preserve the expected mask budget:
\[
\rho_{t,i} \propto v_i(X_t)
\]
Thus, variables that contribute more to the current constraint violation are more likely to be masked and resampled.

\paragraph{Related variables.}
We also consider another simple strategy inspired by Large Neighborhood Search \citep{shaw1998using}. Rather than selecting variables independently, this strategy selects groups of related variables that participate in common constraints. 

Let $\mathcal{C}=\{c_1,\ldots,c_M\}$ denote the set of constraints, and let
$\mathrm{vars}(c_k) \subseteq [d]$ be the variables appearing in constraint $c_k$.
At reverse step $t$, we sample a Bernoulli mask over constraints,
\[
m_{t,k}^{\mathrm{constr}} \sim \mathrm{Bernoulli}(\eta_t),
\qquad k \in [M],
\]
and define the variable destroy set as the union of variables appearing in the
selected constraints:
\[
S_t
=
\bigcup_{k : m_{t,k}^{\mathrm{constr}} = 1}
\mathrm{vars}(c_k).
 \quad m_{t}^{(i)} = \mathbb{I}\{i \in S_t\}.
\]

We choose the constraint-level sampling rate $\eta_t$ so that the resulting expected variable-level destruction rate is approximately $\rho_t$. 



\section{Transformer architecture}
\label{app:transformer}

We adopt the same Transformer architecture as \citet{xu2025selfsupervised}. This section summarizes the input representation and attention-based update rule used to model CSP assignments.

\subsection{Input representation}
We represent a CSP assignment as a set of $n$ tokens, one per variable. Throughout, we focus on tasks where all variables share the same discrete domain $D$ with $|D|=K$.

\paragraph{Value embeddings.}
Given a current assignment $\{x_i=v_i\}_{i=1}^n$, each variable token receives a learnable embedding $\mathbf{e}(v_i)$ for its current value $v_i\in D$.

\paragraph{Absolute positional encoding (APE).}
To encode variable identity, we add an absolute positional encoding $\mathrm{APE}(x_i)$. For variables with multi-dimensional indices $(i_1,\dots,i_k)$ (e.g., Sudoku), we concatenate positional encodings per dimension inspired by vision Transformers\cite{carion2020end,li2025tackling}:
\begin{equation}
\mathrm{APE}(x_{i_1,\dots,i_k})=\mathrm{Concat}(\mathrm{PE}(i_1),\dots,\mathrm{PE}(i_k)).
\end{equation}

\paragraph{Constraint graph as relative positional encoding (RPE).}
We construct a binary constraint graph $G=(V,E)$ where vertices correspond to variables and $(i,j)\in E$ if variables $i$ and $j$ co-occur in any constraint. We incorporate this structure via an additive bias on attention logits:
\begin{equation}
\mathrm{RPE}(i,j)=c\cdot \mathbb{I}[(i,j)\notin E], \qquad c\le 0,
\label{eq:rpe}
\end{equation}
which reduces (or masks) attention between unrelated variables. We use either a fixed $c$ or a learned scalar.

\subsection{Transformer update rule}
\label{app:arch_transformer}

\paragraph{Self-attention with constraint bias.}
At layer $\ell$, a standard multi-head self-attention block computes
\[
\mathbf{A}_{ij}=\frac{(\mathbf{W}^Q\mathbf{h}_i^{(\ell)})^\top(\mathbf{W}^K\mathbf{h}_j^{(\ell)})}{\sqrt{d}}+\mathrm{RPE}(i,j),
\qquad
\alpha_{ij}=\mathrm{softmax}_j(\mathbf{A}_{ij}),
\qquad
\mathbf{z}_i=\sum_j \alpha_{ij}\mathbf{W}^V\mathbf{h}_j^{(\ell)}.
\]
We then apply a position-wise feedforward network with standard residual connections and normalization (omitted for brevity).

\paragraph{Output.}
For each variable $i \in S$, the final-layer representation $\mathbf{h}_i^{(L)}$ is mapped to the parameters of a diagonal Gaussian distribution:
\[
\boldsymbol{\mu}_i = \mathbf{W}_{\mu}\mathbf{h}_i^{(L)} + \mathbf{b}_{\mu},
\qquad
\log \boldsymbol{\sigma}_i^{2} = \mathbf{W}_{\log\sigma^2}\mathbf{h}_i^{(L)} + \mathbf{b}_{\log\sigma^2},
\]
where $\boldsymbol{\mu}_i, \log \boldsymbol{\sigma}_i^{2} \in \mathbb{R}^{K}$. We then sample
\[
\mathbf{z}_i = \boldsymbol{\mu}_i + \boldsymbol{\sigma}_i \odot \boldsymbol{\epsilon},
\qquad
\boldsymbol{\epsilon} \sim \mathcal{N}(\mathbf{0}, \mathbf{I}),
\qquad
\boldsymbol{\sigma}_i = \exp\!\left(\tfrac{1}{2}\log \boldsymbol{\sigma}_i^{2}\right),
\]
using the reparameterization trick. The sampled latent $\mathbf{z}_i$ is decoded into the current solution, and used for the next diffusion step.



\section{Constraint programming formulations}
\label{app:cpform}
This section summarizes the CP formulations used to define energies for each benchmark.

\paragraph{Sudoku.}
Let $X_{i,j}\in\{1,\dots,9\}$ denote the value in cell $(i,j)$ for $i,j\in\{1,\dots,9\}$.
Sudoku constraints enforce uniqueness in every row, column, and $3\times 3$ subgrid:
\[
\textsc{AllDifferent}(X_{i,1},\dots,X_{i,9})\ \ \forall i,\qquad
\textsc{AllDifferent}(X_{1,j},\dots,X_{9,j})\ \ \forall j,
\]
and for each subgrid indexed by $(r,c)\in\{0,1,2\}^2$,
\[
\textsc{AllDifferent}\big(\{X_{3r+a,3c+b}\}_{a,b\in\{1,2,3\}}\big).
\]
Given clues are enforced by fixing the corresponding variables.

\paragraph{Graph Coloring.}
Given a graph $G=(V,E)$ and $k$ colors, assign each vertex $v\in V$ a variable $X_v\in[k]$.
The constraints require adjacent vertices to have different colors:
\[
X_u \neq X_v \qquad \forall (u,v)\in E.
\]

\paragraph{MaxCut.}
Given a graph $G=(V,E)$, introduce binary variables $X_v\in\{0,1\}$ indicating the side of the cut.
MaxCut can be viewed as a \emph{2-coloring MaxCSP}: each edge $(u,v)$ corresponds to an inequality constraint $X_u\neq X_v$, and the objective is to satisfy as many such constraints as possible:
\[
\max_X\ \sum_{(u,v)\in E} \mathbb{I}[X_u\neq X_v],
\]
which we write in minimization form for the energy:
\[
\min_X\ -\sum_{(u,v)\in E} \mathbb{I}[X_u\neq X_v].
\]

\paragraph{Maximum Independent Set.}
Given a graph $G=(V,E)$, introduce binary variables $X_v\in\{0,1\}$ indicating whether vertex $v$ is selected.
The independent set constraints are
\[
X_u + X_v \le 1 \qquad \forall (u,v)\in E,
\]
and the objective is to maximize set size:
\[
\max_X\ \sum_{v\in V} X_v,
\qquad\text{equivalently}\qquad
\min_X\ -\sum_{v\in V} X_v.
\]

\section{Training details}
\label{app:train_details}


Our models were trained on single NVIDIA H100 GPU nodes. The final reported models were trained with a batch size of 512 for 5000 epochs. The typical training time for a model ranges from 3 to 6 hours (wall clock). For all models, we used AdamW as the optimizer and applied a dropout of 0.1, with learning rate set to 0.0001. For optimization tasks, we follow \citet{sanokowski2024a} and set $\lambda_j$ in \Cref{eq:cp_energy} to 1.01. We anneal $\tau$ in \Cref{eq:joint_decomp} across training epochs linearly from 1 towards 0. We anneal $\rho$ with $\rho_{\max} = 0.9$ and $\rho_{\min} = 0.3$. Our approach requires unrolling the reverse diffusion steps for each gradient update, this restricts the number of diffusion steps we can back propagate through. DiT and \methodname{} models were trained with 5 diffusion steps. 
Because our model is time-agnostic (shared parameters across steps), we also experiment with a cheaper single-step estimator: at each gradient update we sample a timestep (equivalently, a block size associated with that timestep), apply one reverse transition, and treat this as a Monte-Carlo estimate of the objective averaged over timesteps.


\begin{table}[ht]
    \centering
    \caption{Hyperparameters for the best performing models.}
    \begin{tabular}{lccccc}
        \toprule
        & Sudoku & Graph-coloring-5 & Graph-coloring-10 & MIS & MaxCut \\
        \midrule
        Layer Count & 7 & 4 & 7 & 4  & 4\\
        Head Count & 3 & 3 & 3 & 3 & 3 \\
        Embedding Size & 128 & 128 & 128 & 128 & 128 \\
        Mask Selection & margin & related & related & critical & margin\\
        \bottomrule
    \end{tabular}
    \label{tab:hyperparam}
\end{table}

\section{Baseline methods}
\label{app:baselines}
This section briefly summarizes the baselines we compare against for each task.

\paragraph{Sudoku.}
We compare with SATNet~\cite{satnet}, a differentiable MAXSAT solver based on solving an SDP relaxation; RRN~\cite{rrn}, a recurrent message-passing model for multi-step relational reasoning; the Recurrent Transformer~\cite{yang2023learning}, which extends Transformers with recurrence for end-to-end CSP solving; iRED~\cite{ired}, which formulates reasoning as energy-based optimization and performs iterative inference via energy diffusion; RLSA~\cite{feng2025regularized}, where we formulate Sudoku as a QUBO following \cref{sec:sudoku_qubo}; and ConsFormer\cite{xu2025selfsupervised}.

\paragraph{Graph Coloring.}
We compare with the greedy coloring algorithm implemented by NetworkX~\cite{networkx}; Feasibility jump, which is a local search heuristic~\cite{luteberget2023feasibility}; ANYCSP~\cite{anycsp}, which proposes a universal GNN trained to act as an end-to-end heuristic for CSPs; RLSA~\cite{feng2025regularized}, where we formulate graph coloring as a QUBO following \cref{sec:graph_coloring_qubo}; and ConsFormer\cite{xu2025selfsupervised}.

\paragraph{MaxCut.}
We compare with RUNCSP~\cite{runcsp}, an unsupervised GNN approach for maximum constraint satisfaction over binary CSPs; ECO-DQN~\cite{barrett2020exploratory}, an RL method that improves solutions by exploratory local modifications during inference; ECORD~\cite{ecord}, an RL approach that amortizes expensive GNN computation; RLSA~\cite{feng2025regularized}, ANYCSP~\cite{anycsp}, DiffUCO~\cite{sanokowski2024a}, and ConsFormer\cite{xu2025selfsupervised}.

\paragraph{MIS.}
Following \citet{sanokowski2024a}, we compare with INTEL~\cite{li2018combinatorial} and DGL~\cite{bother2022whats}, which use a Graph Convolutional Network to predict node membership likelihoods and then performs guided tree search; LwD~\cite{ahn2020learning}, which learns a policy for deferring decisions to construct MIS in a variable number of stages; LTFT~\cite{zhang2023let}, which trains conditional GFlowNets to sequentially construct solutions and sample from an unnormalized objective over the solution space, RLSA~\cite{feng2025regularized}, and DiffUCO~\cite{sanokowski2024a}.


\begin{table}[h]
\centering
\caption{\textcolor{black}{Sources of baseline results. ``New'' denotes numbers obtained by running the method under the evaluation protocol used in this paper. Otherwise, numbers are taken from the cited source.}}
\label{tab:baseline_sources}
\resizebox{\linewidth}{!}{
\begin{tabular}{lll}
\toprule
Benchmark & Methods & Source \\
\midrule
Sudoku
& OR-Tools, SATNet, RRN, Recurrent Transformer, IRED, ConsFormer
& \citet{xu2025selfsupervised} \\
&  RLSA 
& New \\
\midrule
Graph Coloring
& OR-Tools, ANYCSP, ConsFormer
& \citet{xu2025selfsupervised} \\
& RLSA 
& New \\
\midrule
MIS
& LwD, INTEL, DGL, LTFT, DiffUCO
& \citet{sanokowski2024a} \\
& DIFUSCO, RLSA
& \citet{feng2025regularized} \\
& OR-Tools
& New \\
\midrule
MaxCut
& RUNCSP, ECO-DQN, ECORD, DiffUCO
& \citet{sanokowski2024a} \\
& OR-Tools, ANYCSP, ConsFormer
& \citet{xu2025selfsupervised} \\
& RLSA
& New \\
\bottomrule
\end{tabular}}
\end{table}

\subsection{Sudoku QUBO Formulation}
\label{sec:sudoku_qubo}
\textcolor{black}{
Following~\citet{mucke2024simple}, we represent a Sudoku assignment with
binary variables \(X_{r,c,d} \in \{0,1\}\), where \(X_{r,c,d}=1\) indicates
that digit \(d\) is placed in cell \((r,c)\). 
The QUBO energy is
\[ E(X) = -\sum_{r,c,d} X_{r,c,d} + \lambda
\sum_{\{(r,c,d),(r',c',d')\}\in \mathcal{C}}
X_{r,c,d}X_{r',c',d'},
\]
where \(\mathcal{C}\) is the set of conflicting assignment pairs. Two
assignments conflict if they place the same digit in the same row, column, or
\(3\times3\) box, or if they assign different digits to the same cell. We use
\(\lambda=3\), as in \citet{mucke2024simple}.
Given clues are incorporated by clamping the corresponding variables:
if cell \((r,c)\) is given as digit \(d\), then \(X_{r,c,d}=1\) and
\(X_{r,c,d'}=0\) for all \(d'\neq d\).
}
\subsection{Graph Coloring QUBO Formulation}
\label{sec:graph_coloring_qubo}
\textcolor{black}{
Following \citet{glover2018tutorial}, let \(G=(V,E)\) be an undirected graph and let \(K\) be the number of available colors. We introduce binary variables \(X_{v,k}\in\{0,1\}\), where \(X_{v,k}=1\) indicates that vertex \(v\) is assigned color \(k\). The QUBO
energy is
\[ E(X) = \sum_{v\in V} \left(1 - \sum_{k=1}^{K} X_{v,k} \right)^2 + \sum_{(u,v)\in E} \sum_{k=1}^{K} X_{u,k}X_{v,k}.
\]
The first term enforces that each vertex receives exactly one color, while the
second penalizes adjacent vertices assigned the same color. 
}

\section{Computational Efficiency}
\label{app:efficiency}

\textcolor{black}{
No prior work evaluates all four benchmarks under a single unified protocol, and different benchmark families follow different reporting conventions. We therefore follow the standard evaluation convention for each benchmark and report the corresponding wall-clock budgets and sampling steps in \Cref{tab:compute_cost}. For Sudoku, we use a fixed budget of $2000$ reverse diffusion steps following~\citet{xu2025selfsupervised}. For Graph Coloring, all methods are evaluated under a strict $10$-second wall-clock limit per instance, following~\citet{xu2025selfsupervised}. For MaxCut, we use a $180$-second wall-clock limit and report the best result over $20$ independent runs, following~\citet{sanokowski2024a}. For MIS, comparison is less standardized across prior work, so we use a fixed budget of $1000$ reverse diffusion steps and report the measured runtime, for reference, OR-Tools results averaged 0.39 seconds per instance for RB-small, and 18.43 seconds for RB-large.
}
\begin{table}[!h]
    \centering
    \begin{tabular}{l|c|c}
         \toprule
         Dataset & Runtime (Seconds) & Inference Steps\\
         \midrule
         Sudoku-SATNet & 0.15 & 2000 \\
         Sudoku-RRN & 0.15 & 2000\\
         Graph-Coloring-5 ($n=50$)  & 10 & 5399\\
         Graph-Coloring-5 ($n=100$)  & 10 & 5439 \\
         Graph-Coloring-10 ($n=100$) & 10 & 3728 \\
         Graph-Coloring-10 ($n=200$)  & 10 & 3931 \\
         MaxCut-800 & 180 & 84853\\
         MaxCut-1K & 180 & 75963\\
         MaxCut-2K   & 180 & 23569 \\
         MaxCut-$\ge$3K  & 180 & 6840 \\
         MIS-RB-small & 0.44 & 1000 \\
         MIS-RB-large & 4.81 & 1000 \\
         \bottomrule
    \end{tabular}
    \caption{Computation cost at inference time. Sudoku and MIS use fixed sampling-step budgets, and the table reports the resulting average runtime per instance. Graph Coloring and MaxCut use fixed wall-clock budgets, and the table reports the average number of sampling steps completed within that budget.}
    \label{tab:compute_cost}
\end{table}

\section{Details for Obtaining 2D Projected Trajectory Visualization}
\label{app:traj_plot}

\textcolor{black}{
To visualize the solving process, we project trajectories into a shared 2D subspace defined only over the \emph{unfixed Sudoku cells}. For a puzzle with mask \(m \in \{0,1\}^{81}\), where \(m_i=1\) indicates a given cell, each solution state is represented as a matrix \(X \in [0,1]^{81 \times 9}\) of per-cell categorical probabilities. We discard the fixed cells and flatten the remaining \(U\) unfixed rows into a vector in \(\mathbb{R}^{9U}\). To provide context for the projection, we construct a static background cloud of \(N\) reduced states consisting of: (i) random assignments on the unfixed cells, and (ii) partially corrupted versions of the target solution, where a random fraction of unfixed cells is replaced by random digits. We then fit a single 2D dimensionality reduction map \(f\) on this static set using PCA, and project all trajectories with the same fitted map. Trajectory points are colored by accuracy, computed as the number of cells whose decoded digit matches the target solution. The resulting figure shows a hexbin/KDE background over the projected static cloud, together with the projected paths and the noise/target anchors. This visualization illustrates broad differences in trajectory behavior, but the 2D geometry and background density should not be interpreted as the true energy landscape.
}

\section{MaxCut Per Instance Performance}
\label{app:maxcut_perins}

\textcolor{black}{
The convention for MaxCut is to conduct 20 parallel runs and report the best cut found~\cite{ecord,anycsp,xu2025selfsupervised}. In \cref{tab:per_instance_cut_comparison}, we additionally detail the mean and standard deviation of the 20 runs for each instance for ablation. We see that \methodname{} consistently outperforms the $-$AMS and $-$Entropy models, with a better mean and often smaller variance.
}

\begin{table}[h]
\caption{Per-instance mean MaxCut value. Values are reported as mean $\pm$ standard deviation across runs.}
\label{tab:per_instance_cut_comparison}
\resizebox{\linewidth}{!}{
\begin{tabular}{ccllll}
\toprule
$|V|$ & ID & Best Known & \methodname{} & $-$AMS & $-$Entropy \\
\midrule
800 & G1 & 11624 & $11472.81 \pm 208.31$ & $11458.18 \pm 196.54$ & $11441.70 \pm 218.23$ \\
800 & G14 & 3064 & $3052.60 \pm 10.19$ & $3030.51 \pm 25.88$ & $3032.84 \pm 25.91$ \\
800 & G15 & 3050 & $3042.07 \pm 8.96$ & $3019.81 \pm 27.20$ & $3016.64 \pm 27.04$ \\
800 & G16 & 3052 & $3039.42 \pm 10.49$ & $3015.63 \pm 27.07$ & $3017.18 \pm 26.47$ \\
800 & G17 & 3047 & $3035.26 \pm 9.08$ & $3013.03 \pm 26.79$ & $3014.31 \pm 26.59$ \\
800 & G2 & 11620 & $11595.72 \pm 16.46$ & $11551.31 \pm 69.30$ & $11586.11 \pm 33.21$ \\
800 & G3 & 11622 & $11604.43 \pm 15.85$ & $11562.20 \pm 61.89$ & $11593.73 \pm 34.26$ \\
800 & G4 & 11646 & $11622.19 \pm 16.26$ & $11589.03 \pm 54.28$ & $11620.74 \pm 30.67$ \\
800 & G5 & 11631 & $11610.77 \pm 18.35$ & $11571.75 \pm 58.59$ & $11597.58 \pm 31.59$ \\
1000 & G43 & 6660 & $6644.30 \pm 14.23$ & $6615.66 \pm 42.39$ & $6634.94 \pm 29.18$ \\
1000 & G44 & 6650 & $6637.54 \pm 11.17$ & $6607.81 \pm 39.71$ & $6624.77 \pm 29.31$ \\
1000 & G45 & 6654 & $6632.76 \pm 11.52$ & $6604.86 \pm 41.78$ & $6624.45 \pm 26.88$ \\
1000 & G46 & 6649 & $6630.79 \pm 9.74$ & $6599.07 \pm 43.82$ & $6619.89 \pm 28.61$ \\
1000 & G47 & 6657 & $6640.88 \pm 10.12$ & $6610.74 \pm 42.04$ & $6626.45 \pm 31.02$ \\
1000 & G51 & 3848 & $3830.54 \pm 13.50$ & $3792.28 \pm 41.17$ & $3803.64 \pm 36.91$ \\
1000 & G52 & 3851 & $3832.62 \pm 15.65$ & $3791.62 \pm 40.07$ & $3803.55 \pm 35.11$ \\
1000 & G53 & 3850 & $3832.60 \pm 14.16$ & $3793.44 \pm 39.37$ & $3806.51 \pm 35.46$ \\
1000 & G54 & 3852 & $3831.60 \pm 14.32$ & $3790.84 \pm 41.40$ & $3802.84 \pm 36.90$ \\
2000 & G22 & 13359 & $13273.68 \pm 42.44$ & $13149.69 \pm 129.30$ & $13196.65 \pm 105.45$ \\
2000 & G23 & 13344 & $13267.71 \pm 39.96$ & $13129.58 \pm 126.63$ & $13191.79 \pm 101.66$ \\
2000 & G24 & 13337 & $13267.58 \pm 38.20$ & $13124.94 \pm 129.01$ & $13181.36 \pm 101.00$ \\
2000 & G25 & 13340 & $13273.22 \pm 36.90$ & $13128.99 \pm 123.09$ & $13181.04 \pm 100.34$ \\
2000 & G26 & 13328 & $13257.83 \pm 39.25$ & $13110.05 \pm 129.10$ & $13168.61 \pm 102.30$ \\
2000 & G35 & 7687 & $7607.21 \pm 44.49$ & $7480.26 \pm 102.37$ & $7503.27 \pm 92.98$ \\
2000 & G36 & 7680 & $7594.49 \pm 46.06$ & $7473.70 \pm 101.54$ & $7496.55 \pm 95.02$ \\
2000 & G37 & 7691 & $7609.53 \pm 45.62$ & $7488.30 \pm 101.37$ & $7513.06 \pm 94.43$ \\
2000 & G38 & 7688 & $7603.97 \pm 46.51$ & $7477.57 \pm 106.09$ & $7504.32 \pm 93.04$ \\
3000 & G48 & 6000 & $5921.31 \pm 74.40$ & $5885.74 \pm 74.50$ & $5831.36 \pm 72.82$ \\
3000 & G49 & 6000 & $5921.26 \pm 61.67$ & $5901.34 \pm 50.51$ & $5840.73 \pm 79.83$ \\
3000 & G50 & 5880 & $5829.41 \pm 25.17$ & $5815.64 \pm 36.24$ & $5791.48 \pm 41.38$ \\
5000 & G55 & 10299 & $10134.04 \pm 82.59$ & $10014.15 \pm 124.21$ & $10025.54 \pm 134.24$ \\
5000 & G58 & 19293 & $18953.14 \pm 154.25$ & $18560.62 \pm 266.39$ & $18664.06 \pm 246.61$ \\
7000 & G60 & 14188 & $13914.26 \pm 132.31$ & $13738.54 \pm 162.53$ & $13758.21 \pm 188.94$ \\
7000 & G63 & 27045 & $26497.49 \pm 222.59$ & $25910.97 \pm 319.77$ & $26084.21 \pm 311.98$ \\
10000 & G70 & 9591 & $9344.60 \pm 100.28$ & $9283.77 \pm 104.83$ & $9315.53 \pm 110.99$ \\
\bottomrule
\end{tabular}}
\end{table}

\end{document}